\theoremstyle{plain}
\newtheorem{theorem}{Theorem}[section]
\newtheorem{proposition}[theorem]{Proposition}
\theoremstyle{definition}
\theoremstyle{remark}
\newtheorem{remark}[theorem]{Remark}
\newtheorem{example}[theorem]{Example}
\DeclareMathOperator{\expected}{\mathbb{E}}
\DeclareMathOperator{\D}{\mathcal{D}}
\DeclareMathOperator{\N}{\mathcal{N}}
\DeclareMathOperator{\X}{\mathcal{X}}
\DeclareMathOperator{\Y}{\mathcal{Y}}
\DeclareMathOperator{\Simp}{\Delta^k}
\DeclareMathOperator{\LL}{\mathcal{L}}
\DeclareMathOperator{\borel}{\mathcal{B}}
\icmltitlerunning{Amortized Variational Inference for Partial-Label Learning}
\begin{document}

\twocolumn[
    \icmltitle{Amortized Variational Inference for Partial-Label Learning: \\ A Probabilistic Approach to Label Disambiguation}



    \icmlsetsymbol{equal}{*}

    \begin{icmlauthorlist}
        \icmlauthor{Tobias Fuchs}{kit}
        \icmlauthor{Nadja Klein}{kit}
    \end{icmlauthorlist}

    \icmlaffiliation{kit}{Karlsruhe Institute of Technology, Karlsruhe, Germany}

    \icmlcorrespondingauthor{Tobias Fuchs}{tobias.fuchs@kit.edu}

    \icmlkeywords{Partial-Label Learning, Weakly Supervised Learning, Amortized Variational Inference}

    \vskip 0.3in
]



\printAffiliationsAndNotice{}  

\begin{abstract}
    Real-world data is frequently noisy and ambiguous.
    In crowdsourcing, for example, human annotators may assign conflicting class labels to the same instances.
    Partial-label learning (PLL) addresses this challenge by training classifiers when each instance is associated with a set of candidate labels, only one of which is correct.
    While early PLL methods approximate the true label posterior, they are often computationally intensive.
    Recent deep learning approaches improve scalability but rely on surrogate losses and heuristic label refinement.
    We introduce a novel probabilistic framework that directly approximates the posterior distribution over true labels using amortized variational inference.
    Our method employs neural networks to predict variational parameters from input data, enabling efficient inference.
    This approach combines the expressiveness of deep learning with the rigor of probabilistic modeling, while remaining architecture-agnostic.
    Theoretical analysis and extensive experiments on synthetic and real-world datasets demonstrate that our method achieves state-of-the-art performance in both accuracy and efficiency.
\end{abstract}
\section{Introduction}
\label{sec:intro}
Real-world datasets are prone to noise and labeling uncertainty.
For instance, different annotators may assign conflicting class labels to the same instance.
Such ambiguity frequently arises in applications like web mining \citep{GuillauminVS10,ZengXJCGXM13} and audio classification \citep{BriggsFR12}.
Although data cleaning can mitigate these issues, it is often time-consuming and resource-intensive.
Partial-label learning (PLL; \citealt{ZhangYT17,LvXF0GS20,WangXLF0CZ22}) provides a principled approach to learning from such data without requiring manual label cleaning.
In PLL, each instance is associated with a set of candidate labels, exactly one of which is the unknown true label.
PLL algorithms facilitate the training of multi-class classifiers in this weakly-supervised setting.

Recent state-of-the-art methods \citep{XuQGZ21,crosel2024,YangC0DJH25} leverage deep learning to optimize surrogate loss functions, such as the minimum loss \citep{LvXF0GS20}, contrastive loss \citep{WangXLF0CZ22}, or discriminative loss \citep{YangC0DJH25}; and to refine candidate label sets through heuristic strategies, including importance reweighting \citep{FengL0X0G0S20}, confidence thresholds \citep{0009LLQG23}, or label smoothing \citep{GongB024}.
In contrast, earlier PLL methods \citep{JinG02,LiuD12} approximate the posterior distribution over true labels directly---typically through expectation-maximization---rather than relying on heuristic refinements.
However, these methods are computationally intensive and hardly scale.

Our method addresses this gap by integrating two paradigms: direct approximation of the true label posterior and the use of recent advances in neural network (NN) training techniques.
Specifically, we employ amortized variational inference (VI) to approximate the posterior distribution over labels, while using NNs to predict the variational parameters directly from the input data.
This formulation enables efficient and scalable inference, thereby overcoming the computational limitations of earlier expectation-maximization-based methods.

Our \textbf{contributions} are as follows.
\begin{itemize}
    \item \emph{A principled PLL framework.} We introduce \textsc{ViPll}, a novel PLL method that formulates label disambiguation as amortized VI. Unlike prior approaches, \textsc{ViPll} directly approximates the posterior over true labels without relying on surrogate losses or heuristic refinements.
          Variational parameters are predicted via NNs, and the model is trained end-to-end by minimizing the evidence lower bound using stochastic gradient descent.
          Our algorithm is computationally efficient.
    \item \emph{Strong empirical evidence.} We conduct extensive experiments on both synthetic and real-world datasets, benchmarking \textsc{ViPll} against nine state-of-the-art PLL methods. Our approach consistently outperforms existing baselines in predictive accuracy and achieves top performance in the majority of settings. Code and datasets are publicly available.\footnote{\label{fnt:code}\url{https://github.com/mathefuchs/vi-pll}}
    \item \emph{Theoretical justification.} We theoretically justify our optimization objective by deriving it from Bayes' rule and leveraging the equivalence of different causal models underlying PLL, which informs the design of our method.
\end{itemize}

\section{Notation}
\label{sec:notations}
In this section, we state the PLL problem and introduce the relevant notation used throughout our work.

Let $\X = \mathbb{R}^d$ be a $d$-dimensional feature space and $\Y = [k] := \{1, \dots, k\}$ a set of $k$ class labels.
A partially-labeled dataset $\D = \{(x_i, s_i) \in \X \times 2^{\Y} \mid i \in [n]\}$ consists of $n$ instances with associated features $x_i \in \X$ and candidate label sets $s_i \in 2^{\Y}$, for $i \in [n]$.
Their respective ground-truth labels $y_i \in \Y$ are unknown during training, but $y_i \in s_i$.

Let further $\Simp = \{y \in [0, 1]^k \mid \sum_{j=1}^{k} y_j = 1 \}$, where $y_j$ denotes the $j$-th component of $y$.
Given a partially-labeled dataset $\D$, the goal of PLL is to train a probabilistic multi-class classifier $g : \X \to \Simp$ that accurately predicts class labels for unseen inputs $x' \in \X$.
Section~\ref{sec:pll} gives an overview of how existing work obtains such a classifier and Section~\ref{sec:main} details our novel PLL method \textsc{ViPll}.

PLL is formulated on the measurable space $(\Omega, \borel(\Omega))$, where $\Omega = \X \times \Simp \times 2^{\Y}$, and $\borel$ denotes the Borel $\sigma$-algebra.
We define the random variables $X : \Omega \to \X$, $Y : \Omega \to \Simp$, and $S : \Omega \to 2^{\Y}$ to model the generation of instances, their latent label distributions, and the observed candidate labels, respectively.
We consider probability measures $P$ and $Q$ over $(\Omega, \mathcal{B}(\Omega))$, with corresponding densities $p$ and $q$ defined as the Radon–Nikodym derivatives with respect to a suitable product measure composed of Lebesgue and counting measures.

\section{Existing Work}
\label{sec:rel-work}
Section~\ref{sec:pll} reviews related work on PLL, while Section~\ref{sec:vi} covers VI, which we adopt to tackle the PLL problem.

\subsection{Partial-Label Learning}
\label{sec:pll}
PLL has received increasing attention over the past decades.
Most existing approaches adapt standard supervised classification algorithms to the PLL context.
Examples include nearest-neighbor methods \citep{HullermeierB06,ZhangY15a,fuchs2024partiallabel},
support-vector machines \citep{NguyenC08,CourST11,YuZ17},
and label propagation strategies \citep{ZhangY15a,ZhangZL16,XuLG19,WangLZ19,FengA19}.

Recent state-of-the-art methods \citep{crosel2024,YangC0DJH25,abs-2502-07661} employ deep learning to train a multi-class classifier and iteratively refine the candidate sets.
Since ground-truth labels are unavailable, optimization is performed using surrogate loss functions.
For example, \citet{LvXF0GS20,FengL0X0G0S20} propose the minimum loss formulation,
\citet{XuQGZ21} introduce a self-training strategy based on pseudo-labels,
\citet{ZhangF0L0QS22,fuchs2025robust} leverage the magnitudes of class activation values,
\citet{WangXLF0CZ22} use ideas from contrastive learning,
\citet{0009LLQG23} utilize level sets to iteratively remove incorrect labels from the candidate sets,
\citet{crosel2024} propose a cross-model selection strategy, where multiple models are trained simultaneously,
and \citet{YangC0DJH25} introduce a pseudo-labeling framework based on the feature representations.
While these methods use heuristics and surrogate losses to refine the candidate sets, our method directly approximates the true label posterior, offering a more principled solution to uncovering the hidden ground truth.

We note that several existing methods incorporate probabilistic components, such as modeling labels with a Dirichlet distribution~\citep{XuQGZ21,fuchs2025robust}.
However, unlike our approach, these methods employ the Dirichlet model as an auxiliary mechanism, either as a regularization term or as an enhancement to classifier training, rather than as the core of their method.

Our approach is closest to the expectation-maximization strategies by \citet{JinG02} and \citet{LiuD12}.
However, these methods are computationally expensive, which limits their application on real-world datasets.
In contrast, our method leverages NNs to amortize inference by directly predicting variational parameters from input data, enabling efficient and scalable learning.

\subsection{Variational Inference}
\label{sec:vi}
Variational methods offer a principled framework for approximate Bayesian inference by formulating posterior estimation as an optimization problem.
Early methods \citep{JordanGJS99,Attias99,Beal03} introduce mean-field approximations and EM algorithms for latent variable models.
These approaches typically rely on model-specific derivations and coordinate ascent updates.
\citet{KingmaW13} introduce variational auto-encoders (VAE) and amortized VI, which employ a NN (the encoder) to predict the variational parameters directly from input data.
They also make use of the reparameterization trick to enable backpropagation through stochastic variables, facilitating  scalable and efficient inference.
\citet{RezendeMW14} propose a similar approach in the context of deep latent Gaussian models.
Subsequent extensions include VI with normalizing flows \citep{RezendeM15} and a model-agnostic optimization formulation \citep{RanganathGB14}.
\citet{SohnLY15} introduce conditional variational auto-encoders (CVAE), which extend VAEs by conditioning the variational parameters on additional input data.
Our method builds on this formulation and is detailed in the next section.
An overview of recent developments in amortized VI in general is that of \citet{MargossianB24}.

\section{Variational Inference for PLL}
\label{sec:main}
We propose \textsc{ViPll}, a novel PLL approach that employs amortized VI as a principled framework for disambiguating the candidate label sets.
Specifically, we use fixed-form distributions---Dirichlet and Gaussian in our case---whose parameters are learned by NNs.
This combines the flexibility of NNs with the probabilistic rigor of VI.
Unlike standard VI, which optimizes variational parameters independently for each data point, amortized VI learns a shared inference model that maps input features to variational parameters via a NN.
Importantly, our method is architecture-agnostic, allowing seamless integration with diverse neural architectures and facilitating adaptation to various data modalities.

Similar to an EM procedure, our method alternates between estimating all latent variables via Monte Carlo sampling and optimizing NN parameters through backpropagation.
In practice, good predictive performance can be achieved with a relatively small number of Monte Carlo samples.
This enables our method to scale efficiently, in contrast to standard VI methods that are often computationally prohibitive.

Modeling the PLL problem within the VI framework offers a principled way for propagating labeling information and resolving candidate label ambiguity.
The following sections provide a detailed exposition of our method.
Section~\ref{sec:opt} introduces the optimization objective,
Sections~\ref{sec:pxy}\,--\,\ref{sec:psxy} define the individual components of the objective function,
and Section~\ref{sec:algo} presents the resulting algorithm.

\subsection{Optimization Objective}
\label{sec:opt}
\textsc{ViPll} models the posterior distribution $P_{\theta,\gamma}(Y \mid X, S)$, which represents the distribution over an instance's class labels given its features and candidate labels.
We denote by $p_{\theta,\gamma}(Y \mid X, S)$ the respective posterior density with parameters $\theta$ and $\gamma$ (compare Section~\ref{sec:pxy} for details).
Since the true posterior $P_{\theta,\gamma}(Y \mid X, S)$ is intractable in practice, we approximate it with a fixed-form variational distribution $Q_\phi( Y \mid X, S)$ with density $q_\phi( Y \mid X, S)$.
We adopt an amortized VI approach, where $Q_\phi( Y \mid X, S)$ is modeled as a $k$-dimensional Dirichlet distribution, with parameters $\alpha_\phi \in \mathbb{R}^k_{\geq 1}$ learned by a NN $f_\phi$, that is,
\begin{align}
    q_\phi(y \mid x, s) = \mathrm{Dir}(y; \alpha_\phi) \ \text{ with }\ \alpha_\phi = f_{\phi}(x, s) + 1 \text{,} \quad \label{eq:alpha}
\end{align}
where $f_\phi : \X \times 2^{\Y} \to \mathbb{R}^{k}_{\geq 0}$ denotes a NN that maps input features and candidate label sets to non-negative parameter vectors, and $k$ is the number of classes.
Non-negativity is enforced by applying a \emph{softplus} activation function \citep{GlorotBB11} in the output layer of the network, while the Dirichlet distribution enables modeling uncertainty over class label assignments \citep{Josang16,SensoyKK18}.
Since $\alpha_\phi \geq 1$, $\mathrm{Dir}(y; \alpha_\phi)$ only has a single mode, which eases its optimization.

Following the standard VI setting, we minimize the expected value of the reverse Kullback-Leibler (KL)-divergence between the variational and true posterior distributions, where the expectation is taken with respect to $P_{XS}$, the joint distribution of $X$ and $S$:
\begin{align}
    \MoveEqLeft \LL(\phi, \theta, \gamma) = \expected_{XS}[
        D_{\mathrm{KL}}\!\left(
        Q_\phi\!\left( Y \mid X, S \right) \|\,
        P_{\theta, \gamma}\!\left( Y \mid X, S \right)
    \right)] \notag                                                                                                    \\
     & \overset{(i)}{=} \expected_{(x,s) \sim P_{XS}} \int_{\Simp}
    q_\phi(y \mid x, s) \log \frac{q_\phi(y \mid x, s)}{p_{\theta,\gamma}(y \mid x, s)} \mathrm{d} y \notag            \\
     & \overset{(ii)}{=} \expected_{(x,s) \sim P_{XS}}\!\big[ \expected_{y \sim q_\phi(y \mid x, s)}[ \label{eq:l-q-p} \\
     & \quad\qquad \log q_\phi(y \mid x, s) - \log p_{\theta,\gamma}(y \mid x, s) ] \big] \notag
    \text{,}
\end{align}
where $(i)$ applies the definition of the KL divergence, and $(ii)$ the definition of the expectation.

To model $p_{\theta,\gamma}(y \mid x, s)$ in~\eqref{eq:l-q-p}, we use Bayes' rule:
\begin{align}
    P(X, Y, S) & = P(X) P(Y \mid X) P(S \mid X, Y) \label{eq:model1}          \\
               & = P(Y) P(X \mid Y) P(S \mid X, Y) \label{eq:model2} \text{.}
\end{align}
We argue that~\eqref{eq:model2} is beneficial in our setting as it explicitly allows modeling prior information $P(Y)$ on the class labels as well as a generative model of the observed features $P(X \mid Y)$ given label information.
Additionally, in~\eqref{eq:model2}, the unobserved variable $Y$ is not dependent on any other variables, which eases its modeling.
In contrast, existing work \citep{LiuD12,FengL0X0G0S20} relies on the factorization in~\eqref{eq:model1}.
Our experiments in Section~\ref{sec:results} confirm our argument in favor of~\eqref{eq:model2}.

In other words, \eqref{eq:model1} corresponds to a discriminative perspective on the PLL problem, modeling $P(Y \mid X)$, whereas \eqref{eq:model2} adopts a generative perspective, modeling $P(X \mid Y)$.
The discriminative formulation focuses on identifying which labels are most likely given an instance's features, while the generative formulation evaluates how well instance features can be reconstructed given labeling information.
In the generative setting, the underlying auto-encoder can be pre-trained by learning to reconstruct instance features, providing a useful initialization.
Such pre-training is infeasible in the discriminative case, since the true labels are not available, making the generative perspective particularly advantageous.

\begin{example}
    Consider the \emph{cifar} dataset, which contains images of various object classes such as birds, cars, and airplanes.
    In the generative case, pre-training the underlying auto-encoder enables the model to uncover latent representations corresponding to visual components like bird wings, car tires, or airplane turbines.
    This, in turn, helps the model disambiguate labels more effectively.
    For example, given the label \emph{bird}, the instance's features are expected to include representations of wings.
\end{example}

Using~\eqref{eq:model2}, we have $P_{\theta,\gamma}(Y \mid X, S) = P(Y) P_{\theta,\gamma}(X \mid Y) P(S \mid X, Y) / P(X, S)$, where in Section \ref{sec:pxy}, we model $P_{\theta,\gamma}(X \mid Y)$ with a conditional variational auto-encoder (CVAE; \citealt{KingmaW13,SohnLY15}), Section~\ref{sec:prior} elaborates on the prior term $p(y)$,
and Section~\ref{sec:psxy} details the candidate set distribution $p(s \mid x, y)$.

Combined with~\eqref{eq:l-q-p}, we obtain
\begin{align}
    \LL(\phi, \theta, \gamma) & = \expected_{(x,s) \sim P_{XS}}\!\big[ \expected_{y \sim q_\phi(y \mid x, s)}[                                                                                             \\
                              & \qquad \log q_\phi(y \mid x, s) - \log p_{\theta,\gamma}(y \mid x, s) ] \big]                                                                                              \\
                              & = \expected_{(x,s) \sim P_{XS}}\!\big[ \expected_{y \sim q_\phi(y \mid x, s)}[ \label{eq:l-terms}                                                                          \\
                              & \qquad \underbrace{\log q_\phi(y \mid x, s) - \log p(y)}_{\overset{(i)}{=} D_{\mathrm{KL}}( q_{\phi}(y \mid x, s) \,\|\, p(y) )} - \log p_{\theta,\gamma}(x \mid y) \notag \\
                              & \qquad - \log p(s \mid x, y) + \!\underbrace{\log p(x, s)}_{(ii) \text{ constant w.r.t. } \phi, \theta, \gamma}\! ] \big] \text{,} \notag
\end{align}
where $(i)$ acts as a regularization term, and $(ii)$ can be omitted as it is constant in the parameters $\phi$, $\theta$, and $\gamma$.
This induces the $\beta$-ELBO, which is defined as
\begin{align}
    \MoveEqLeft\text{$\beta$-ELBO} = \expected_{(x,s) \sim P_{XS}}\!\big[                               \\
     & \expected_{y \sim q_\phi(y \mid x, s)}[ \log p_{\theta,\gamma}(x \mid y) + \log p(s \mid x, y) ] \\
     & - \beta D_{\mathrm{KL}}( q_{\phi}(y \mid x, s) \,\|\, p(y) ) \big] \text{,} \label{eq:elbo1}
\end{align}
where $\arg\min_{\phi, \theta, \gamma} \LL(\phi, \theta, \gamma) = \arg\max_{\phi, \theta, \gamma} \text{$\beta$-ELBO}$, for $\beta = 1$.
The scaling parameter $\beta \in (0, 1]$ allows weighting~\eqref{eq:elbo1} for more flexibility \citep{HigginsMPBGBML17}.

In practice, we replace $\expected_{(x,s) \sim P_{XS}}$ by a sample average over mini-batches $(x_i, s_i) \in \D_{\mathrm{mb}} \subseteq \D$.
Similarly, we replace $\expected_{y \sim q_\phi(y \mid x, s)}$ by taking a sample average of $b$ Monte Carlo samples $y_i \sim q_\phi(y \mid x, s)$ for $i \in [b]$.
Sampling from the variational distribution is straightforward, since $q_{\phi}(y \mid x, s) = \mathrm{Dir}(y; \alpha_\phi)$ and $p(y) = \mathrm{Dir}(y; \alpha^{\pi})$ (compare Section~\ref{sec:prior}) admit a closed-form expression for their KL divergence in~\eqref{eq:elbo1} \citep{kldiv}.


\begin{remark}
    Factorizing the joint density $P(X,Y,S)$  via either \eqref{eq:model1} or \eqref{eq:model2} has the interpretation of a directed acyclic graph (DAG).
    Specifically, the factorization in~\eqref{eq:model1} corresponds to the DAG in Figure~\ref{fig:gmodel1} \citep[which is the standard in the literature; see][]{CourST11} and the factorization in~\eqref{eq:model2} to that in Figure~\ref{fig:gmodel2}, which we use in our work.
    Proposition~\ref{prop:markov} elaborates on their equivalence, that is, having offline data only, one cannot distinguish the two.
    Both models explain the observed data equally well.
\end{remark}

\begin{proposition}[Markov equivalence]
    \label{prop:markov}
    The causal models represented by the DAGs in Figure~\ref{fig:gmodel1} and~\ref{fig:gmodel2} are Markov equivalent.
\end{proposition}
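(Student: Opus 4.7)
The plan is to apply the standard Verma--Pearl characterization: two DAGs over the same vertex set are Markov equivalent if and only if they share the same skeleton (undirected edge set) and the same set of v-structures (unshielded colliders $A \to B \leftarrow C$ with $A,C$ non-adjacent).

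First, I would read off the edge sets from the two factorizations. The factorization in~\eqref{eq:model1}, $P(X)P(Y\mid X)P(S\mid X,Y)$, corresponds to a DAG on $\{X,Y,S\}$ with directed edges $X \to Y$, $X \to S$, and $Y \to S$. The factorization in~\eqref{eq:model2}, $P(Y)P(X\mid Y)P(S\mid X,Y)$, corresponds to a DAG with directed edges $Y \to X$, $X \to S$, and $Y \to S$. In particular, both DAGs contain every possible undirected edge among the three variables, so their skeletons coincide with the complete graph $K_3$ on $\{X,Y,S\}$.

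Second, I would observe that neither DAG can contain a v-structure. A v-structure requires an unshielded collider, i.e.\ a triple $A \to B \leftarrow C$ in which $A$ and $C$ are non-adjacent. Since every pair of vertices in $K_3$ is adjacent in each DAG, no such unshielded collider can occur in either graph. Therefore the sets of v-structures are trivially equal (both empty).

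Third, by the Verma--Pearl theorem, matching skeletons and matching v-structures imply Markov equivalence, so the two DAGs encode exactly the same set of conditional independence relations, which yields the claim. I do not anticipate any real obstacle: the only subtlety is to make explicit that the PLL DAGs considered are complete on three nodes, from which the absence of v-structures is immediate. An alternative, more self-contained justification would be to directly verify that both factorizations induce the trivial set of conditional independencies (none among $X,Y,S$), but invoking Verma--Pearl keeps the argument short and conceptually clean.
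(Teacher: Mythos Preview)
Your proposal is correct and follows essentially the same approach as the paper: invoke the Verma--Pearl characterization, observe that both DAGs share the complete skeleton on $\{X,Y,S\}$, and conclude that the (empty) sets of v-structures coincide. Your write-up is slightly more detailed in making explicit why a complete three-node graph admits no unshielded collider, but the argument is the same.
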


\begin{proof}
    Recall from \citep[Theorem~1]{VermaP90} that two DAGs are Markov equivalent if and only if they have (1) the same skeleton, that is, the same edges ignoring direction, and (2) the same v-structures, that is, the same set of nodes $A \rightarrow C \leftarrow B$, where $A$ and $B$ are not connected.
    In our setting, (1) and (2) are satisfied.
\end{proof}

\begin{figure}[tbp]
    \begin{center}
        \begin{subfigure}[t]{0.45\linewidth}
            \centering
            \begin{tikzpicture}
                \coordinate (X) at (0.6, 0.95);
                \coordinate (Y) at (0.0, 0.0);
                \coordinate (S) at (1.2, 0.0);
                \def\radius{0.29cm}
                \draw (X) circle (\radius) node {$X$};
                \draw (Y) circle (\radius) node {$Y$};
                \draw (S) circle (\radius) node {$S$};
                \draw[->, shorten >=(\radius), shorten <=(\radius)] (X) -- (S);
                \draw[->, shorten >=(\radius), shorten <=(\radius), thick] (X) -- (Y);
                \draw[->, shorten >=(\radius), shorten <=(\radius)] (Y) -- (S);
            \end{tikzpicture}
            \caption{Standard graphical model.}
            \label{fig:gmodel1}
        \end{subfigure}
        \hspace{0.1cm}
        \begin{subfigure}[t]{0.45\linewidth}
            \centering
            \begin{tikzpicture}
                \coordinate (X) at (0.6, 0.95);
                \coordinate (Y) at (0.0, 0.0);
                \coordinate (S) at (1.2, 0.0);
                \def\radius{0.29cm}
                \draw (X) circle (\radius) node {$X$};
                \draw (Y) circle (\radius) node {$Y$};
                \draw (S) circle (\radius) node {$S$};
                \draw[->, shorten >=(\radius), shorten <=(\radius)] (X) -- (S);
                \draw[->, shorten >=(\radius), shorten <=(\radius), thick] (Y) -- (X);
                \draw[->, shorten >=(\radius), shorten <=(\radius)] (Y) -- (S);
            \end{tikzpicture}
            \caption{Markov-equivalent model.}
            \label{fig:gmodel2}
        \end{subfigure}
    \end{center}
    \caption{
        Different DAGs used in  PLL.
        Figure~\ref{fig:gmodel1} shows the standard model from the literature \citep{CourST11}.
        Figure~\ref{fig:gmodel2} shows the Markov-equivalent model (compare Proposition~\ref{prop:markov}) that we use in our work.
    }
    \label{fig:gmodel}
\end{figure}
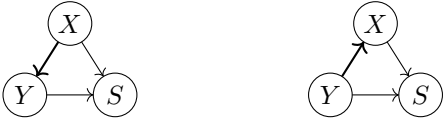

\subsection{Generative Model $p_{\theta,\gamma}(x \mid y)$}
\label{sec:pxy}
To learn how an instance's class label influences its feature distribution, we propose a generative model $p_{\theta,\gamma}(x \mid y)$ based on a CVAE.
In a CVAE, one encodes the labeling information using $m$-dimensional latent variables $Z : \Omega \to \mathbb{R}^m$.
As the true posterior of the latent variables is intractable in most cases, one approximates it using the variational distribution $r_{\gamma}(z \mid x, y)$, which is commonly referred to as the conditional encoder.

Using the importance sampling trick, this yields
\begin{align}
    \MoveEqLeft\log p_{\theta,\gamma}(x \mid y) \overset{(i)}{=}  \log \int_{\mathbb{R}} p_{\theta}(x \mid y, z) p(z \mid y) \mathrm{d} z                    \\
     & \overset{(ii)}{=} \log \int_{\mathbb{R}} r_{\gamma}(z \mid x, y) \frac{p_{\theta}(x \mid y, z) p(z \mid y)}{r_{\gamma}(z \mid x, y)} \mathrm{d} z     \\
     & \overset{(iii)}{=} \log \expected_{z \sim r_{\gamma}(z \mid x, y)}\!\big[ \frac{p_{\theta}(x \mid y, z) p(z \mid y)}{r_{\gamma}(z \mid x, y)} \big]   \\
     & \overset{(iv)}{\geq} \expected_{z \sim r_{\gamma}(z \mid x, y)}\!\big[ \log \frac{p_{\theta}(x \mid y, z) p(z \mid y)}{r_{\gamma}(z \mid x, y)} \big] \\
     & \overset{(v)}{=} \expected_{z \sim r_{\gamma}(z \mid x, y)}\!\big[
    \log p_{\theta}(x \mid y, z) \label{eq:elbo2}                                                                                                            \\
     & \qquad\qquad - D_{\mathrm{KL}}( r_{\gamma}(z \mid x, y) \,\|\, p(z \mid y) )
        \big]
    \text{,}
\end{align}
where $(i)$ marginalizes over the latent variable $Z$,
$(ii)$ introduces the approximate posterior,
$(iii)$ uses the definition of the expectation,
$(iv)$ applies Jensen's inequality, and
$(v)$ recognizes that the second term is the reverse KL-divergence between $r_\gamma$ and $p(z\mid y)$.
We learn $p_{\theta,\gamma}(x \mid y)$  by jointly maximizing~\eqref{eq:elbo2}.
For this, we make the following assumptions on the decoder  $p_{\theta}(x \mid y, z)$ and the encoder $r_{\gamma}(z \mid x, y)$.

We assume that the encoder $r_{\gamma}(z \mid x, y)$ uses NNs to parameterize a Gaussian distribution:
\begin{align}
    r_{\gamma}(z \mid x, y) = \N(z; \mu_\gamma(x, y), \Sigma_\gamma(x, y)) \text{.} \label{eq:pxy-encoder}
\end{align}
To compute the KL-term in~\eqref{eq:elbo2} in closed-form, we further assume $p(z \mid y) = \N(z; 0, I_m)$, such that
\begin{align}
     & D_{\mathrm{KL}}( r_{\gamma}(z \mid x, y) \,\|\, p(z \mid y) ) \label{eq:cvae-kl}                                                                                               \\
     & \!=\! \frac{1}{2} \sum_{i = 1}^{m} \!\left\lbrack\sigma_{\gamma,i}(x, y)^2 \! + \mu_{\gamma,i}(x, y)^2 \! - 1 -2 \log \sigma_{\gamma,i}(x, y)^2\right\rbrack \!\text{.} \notag
\end{align}
Further, $p_{\theta}(x \mid y, z)$ is referred to as the conditional decoder for which we also assume a Gaussian distribution, that is,
\begin{align}
    p_{\theta}(x \mid y, z) = \N(x; \mu_{\theta}(y, z), \sigma^2 I_m) \text{,} \label{eq:pxy-decoder}
\end{align}
where $\mu_{\theta}(y, z)$ uses a NN to parameterize the decoder and $\sigma$ is assumed to be fixed.
Its expectation admits the following closed-form:
\begin{align}
    \MoveEqLeft \expected_{z \sim r_{\gamma}(z \mid x, y)} \log p_{\theta}(x \mid y, z)               \label{eq:mse}                    \\
     & = -\frac{1}{2 \sigma^2} \underbrace{ \| x - \mu_{\theta}(y, z) \|^2 }_{(i)} - \frac{m}{2} \log( 2 \pi \sigma^2 ) \text{,} \notag
\end{align}
where $\|\cdot\|$ is the standard Euclidean norm.
Note that $(i)$ coincides with the standard mean-squared error and acts as the reconstruction loss of the auto-encoder.
In contrast, the KL-term in~\eqref{eq:cvae-kl} acts as regularization for the encoding step.

In practice, we approximate $p_{\theta,\gamma}(x \mid y)$ by drawing $b'$ samples $z_i \sim r_{\gamma}(z \mid x, y)$ and compute
\begin{align}
    p_{\theta,\gamma}(x \mid y) \approx \frac{1}{b'} \sum_{i = 1}^{b'} \frac{p_{\theta}(x \mid y, z_i) p(z_i \mid y)}{r_{\gamma}(z_i \mid x, y)}
    \text{,} \label{eq:approx-pxy}
\end{align}
where we use the common \emph{log-sum-exp} trick for numerical stability \citep{blanchard2021accurately}.
After warming-up the CVAE, we set $\sigma$ in~\eqref{eq:pxy-decoder} to an exponential moving average of the observed RMSE reconstruction loss.

\subsection{Prior $p(y)$}
\label{sec:prior}
Most existing work uses a non-informative prior to initialize  labeling information.
This prior might not satisfy the constraints provided by the candidate sets, however.
This is because, given $(x_i, s_i) \in \D$, the class label $y \in \Y$ needs to occur at least $\sum_{i} \mathds{1}_{\{s_i = \{y\}\}}$ and at most $\sum_{i} \mathds{1}_{\{y \in s_i\}}$ times within the dataset $\D$.
Hence, we consider the optimization problem~\eqref{eq:prior-opt} and find the maximum entropy prior that satisfies these constraints:
\begin{align}
    \max_{\pi \in \Simp} \enspace & \mathrm{H}(\pi) \text{, } \enspace \label{eq:prior-opt}                                                                   \\
    \text{s.t.} \enspace          & \pi_y \geq \frac{1}{|\D|} \sum_{(x_i, s_i) \in \D} \mathds{1}_{\{s_i = \{y\}\}} \text{ for all } y \in \Y \text{,} \notag \\
                                  & \pi_y \leq \frac{1}{|\D|} \sum_{(x_i, s_i) \in \D} \mathds{1}_{\{y \in s_i\}} \text{ for all } y \in \Y \text{,} \notag
\end{align}
where $H : \Simp \to \mathbb{R}, \pi \mapsto -\sum_{y=1}^{k} \pi_y \log \pi_y$ is the entropy.  
We set $p(y) = \mathrm{Dir}(y; \alpha^\pi)$ with $\alpha_{j}^{\pi} = (\frac{\pi_j}{\min_{j' \in \Y} \pi_{j'}})^{\delta} \geq 1$ for $j \in \Y$, where $\delta \in [0, 1]$ allows weighting the prior information and $\delta = 0$ implies the uniform prior $p(y) = \mathrm{Dir}(y; 1_k)$.

\subsection{Candidate Set Distribution $p(s \mid x, y)$}
\label{sec:psxy}
The candidate set distribution $p(s \mid x, y)$ governs how likely candidate sets $s \in 2^{\Y}$ are observed given an instance $x \in \X$ with associated labeling vector $y \in \Simp$.
We use
\begin{align}
    p(s \mid x, y)
    \overset{(i)}{=} p(s \mid y)
    \overset{(ii)}{=} \frac{1}{2^{k-1}} \sum_{j \in s} y_j \text{,} \label{eq:psxy}
\end{align}
where, in~$(i)$, we assume that $x$ and $s$ are conditionally independent given $y$, which is a common assumption in the literature \citep{LiuD12,FengL0X0G0S20}.
In~$(ii)$, we express $p(s \mid y)$ as the amount of labeling information that agrees with the candidate set $s$.
Therefore, \eqref{eq:psxy} acts as a regularization term enforcing that the constraints from the candidate sets are satisfied.
Proposition~\ref{prop:mass-func} demonstrates that this is a valid mass function.

\begin{proposition}
    \label{prop:mass-func}
    $p(s \mid x, y)$ in \eqref{eq:psxy}~is a valid mass function.
\end{proposition}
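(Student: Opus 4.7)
The plan is to verify the two defining axioms of a probability mass function on the finite sample space $2^{\Y}$: non-negativity of $p(s\mid x,y)$ for every $s \subseteq \Y$, and normalization $\sum_{s \in 2^{\Y}} p(s \mid x, y) = 1$. Since $p(s\mid x,y)$ does not actually depend on $x$ by the conditional independence assumption in step $(i)$ of~\eqref{eq:psxy}, everything reduces to showing that $s \mapsto \frac{1}{2^{k-1}} \sum_{j \in s} y_j$ is a valid mass function on $2^{\Y}$ for any $y \in \Simp$.

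For non-negativity I would simply note that $y \in \Simp$ implies $y_j \geq 0$ for all $j \in \Y$, and each term in the sum $\sum_{j \in s} y_j$ is therefore non-negative, while the normalizing constant $1/2^{k-1}$ is strictly positive. The empty-set case $s = \emptyset$ yields $p(\emptyset \mid x, y) = 0$, which is consistent with the PLL setting where candidate sets are assumed non-empty.

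The main (and essentially only) calculation is the normalization, for which I would use a Fubini-style exchange of summation order. Writing
\begin{align}
\sum_{s \subseteq \Y} \sum_{j \in s} y_j
= \sum_{j=1}^{k} y_j \cdot \bigl|\{s \subseteq \Y : j \in s\}\bigr|,
\end{align}
I would then observe that the number of subsets of $\Y$ containing a fixed index $j$ equals $2^{k-1}$ (subsets are in bijection with arbitrary subsets of $\Y \setminus \{j\}$). Combined with $\sum_{j=1}^{k} y_j = 1$ (since $y \in \Simp$), this yields $\sum_{s \subseteq \Y} \sum_{j \in s} y_j = 2^{k-1}$, so the prefactor $1/2^{k-1}$ is exactly what is needed to make the mass sum to one.

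There is no real obstacle here; the statement is essentially a combinatorial identity. The only subtlety worth flagging in the write-up is whether the sample space is taken to be all of $2^{\Y}$ or only the non-empty subsets $2^{\Y} \setminus \{\emptyset\}$: since the $s = \emptyset$ contribution vanishes, both conventions are consistent with the formula, so the proposition holds in either case.
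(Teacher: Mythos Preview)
Your proposal is correct and follows essentially the same approach as the paper: both arguments exchange the order of summation, use that each label $j$ is contained in exactly $2^{k-1}$ subsets of $\Y$, and finish with $\sum_{j\in\Y} y_j = 1$. Your additional remarks on non-negativity and the empty set are fine but not present in the paper's proof, which only displays the normalization computation.
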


\begin{proof}
    Given $y \in \Simp$,
    \begin{align}
        \sum_{s \in 2^{\Y}} p(s \mid y)
         & \overset{(i)}{=} \sum_{s \in 2^{\Y}} \frac{1}{2^{k-1}} \sum_{j \in s} y_j
        \overset{(ii)}{=} \frac{1}{2^{k-1}} \sum_{s \in 2^{\Y}} \sum_{j \in s} y_j         \notag \\
         & \hspace*{-1mm}\overset{(iii)}{=} \frac{1}{2^{k-1}} \sum_{j \in \Y} 2^{k - 1} y_j
        \overset{(iv)}{=} \sum_{j \in \Y} y_j \overset{(v)}{=} 1 \notag \text{,}
    \end{align}
    where $(i)$ inserts~\eqref{eq:psxy}, $(ii)$ moves the factor $1/2^{k-1}$ to the front,
    $(iii)$ holds as there are $2^{k-1}$ subsets $s \in 2^{\Y}$ that contain the label $j \in \Y$,
    $(iv)$ moves the factor $2^{k-1}$ to the front, and $(v)$ holds as $y \in \Simp$.
\end{proof}

\subsection{Proposed Algorithm}
\label{sec:algo}
\begin{algorithm}[t!]
    \caption{\textsc{ViPll}.}
    \label{alg:method}
    \begin{algorithmic}[1]
        \vspace{0.1em}
        \item[\textbf{Input:}] PLL dataset $\D = \lbrace (x_i, s_i) \in \X \times 2^{\Y} : i \in [ n ] \rbrace$; number of epochs $T$, $T_{\mathrm{w}}$; mini-batch size $n_{\mathrm{m}}$; number of MC samples $b$, $b'$; parameters $\beta, \delta \in [0, 1]$;
        \item[\textbf{Output:}] Predictor $g : \X \to \Simp$;
        \vspace{0.3em}
        \STATE Init classifier $f_\phi$ and the CVAE parameterized by $\gamma$, $\theta$;
        \STATE $\pi \leftarrow$ Compute prior by solving~\eqref{eq:prior-opt} numerically;
        \STATE $\tilde{y}_{ij} \leftarrow \frac{1}{|s_i|} \mathds{1}_{\{j \in s_i\}}$ for $i \in [n]$, $j \in \Y$;
        \vspace{0.3em}
        \STATE \emph{$\triangleright$ Warm-up CVAE}
        \FOR{each epoch $t = 1, \dots, T_{\mathrm{w}}$}
        \FOR{each mini-batch $\D_{\mathrm{mb}} = \{(x_i, \tilde{y}_i, s_i)\}_{i \in [n_{\mathrm{m}}]}$}
        \vspace{0.1em}
        \STATE Draw one sample $z_i$ from encoder~\eqref{eq:pxy-encoder};
        \STATE Compute reconstruction loss of $x_i$ as in~\eqref{eq:mse};
        \STATE Compute regularization term~\eqref{eq:cvae-kl};
        \STATE Update the encoder $\gamma$ and decoder parameters $\theta$ using the \emph{Adam} optimizer;
        \ENDFOR
        \ENDFOR
        \vspace{0.3em}
        \STATE \emph{$\triangleright$ Main training loop}
        \FOR{each epoch $t = 1, \dots, T$}
        \FOR{each mini-batch $\D_{\mathrm{mb}} = \{(x_i, \tilde{y}_i, s_i)\}_{i \in [n_{\mathrm{m}}]}$}
        \vspace{0.1em}
        \STATE Draw $b$ samples $(y_{i,o})_{o \in [b]}$ from~\eqref{eq:alpha};
        \STATE Compute~\eqref{eq:approx-pxy} using $b'$ samples and the CVAE model, for each of the $b$ samples;
        \STATE Compute candidate regularizer~\eqref{eq:psxy}, $\forall b$ samples;
        \STATE Compute KL term~\eqref{eq:cvae-kl} in closed-form using $\pi$;
        \STATE Aggregate all quantities using a sample average;
        \STATE Update $f$'s params. $\phi$ using the \emph{Adam} optimizer;
        \STATE Update CVAE params. $\theta$, $\gamma$ similar to lines~4--10;
        \vspace{0.3em}
        \STATE \emph{$\triangleright$ Update current labeling information $\tilde{y}_{ij}$}
        \STATE $\tilde{y}_{ij} \leftarrow \frac{\mathds{1}_{\{j \in s_i\}} \alpha_{ij}}{\sum_{j' \in s_i} \alpha_{ij'}}$ for $i \in [n_{\mathrm{m}}]$, $j \in \Y$;
        \ENDFOR
        \ENDFOR
        \STATE {\bfseries return} predictor $g_j(x) := \frac{f_{j,\phi}(x, \Y) + 1}{\sum_{j' \in \Y} f_{j',\phi}(x, \Y) + 1}$;
    \end{algorithmic}
\end{algorithm}
Algorithm~\ref{alg:method} summarizes \textsc{ViPll},  which is grouped into three phases.

\textbf{Phase 1} (Lines~1--3) sets up the classifier $f_\phi$ with weights $\phi$ and the CVAE with weights $\theta$ and $\gamma$ (Line~1), computes the prior according to~\eqref{eq:prior-opt} in Line~2, and initializes the labeling vectors $\tilde{y}_i \in \Simp$, for $i \in [n]$, by uniformly allocating mass on the class labels contained in the candidate sets $s_i$ (Line~3).
Later, $\tilde{y}_i$ is updated in the main training loop in Line~22 and informs the training of the CVAE, whose latent representation is conditioned on $\tilde{y}_i$.

\textbf{Phase 2} (Lines~4--10) is the warm-up phase for the CVAE in which we minimize the reconstruction and regularization losses in~\eqref{eq:mse} and~\eqref{eq:cvae-kl}, respectively, using the labeling vectors $\tilde{y}_i$.
We use mini-batches and train for $T_{\mathrm{w}} = 500$ epochs using the \emph{Adam} optimizer \citep{KingmaB15}.

\textbf{Phase 3} (Lines~11--22) contains our main training loop which consists of (a) computing the necessary quantities in~\eqref{eq:elbo1} in Lines~14--18, (b) updating our models by backpropagation (Lines~19--20), and (c) updating the labeling vectors $\tilde{y}_i$ in Lines~21--22.
Recall from Section~\ref{sec:opt} that, in step (a), we make use of Monte Carlo sampling ($b = b' = 10$) and sample averages to approximate the involved expectation terms.
In step~(b), we optimize the NNs' parameters using backpropagation and the \emph{Adam} optimizer.
Finally, in step~(c), we update the labeling vectors $\tilde{y}_i$ using the current predictions $\alpha_i = f_\phi(x_i, s_i) + 1$ of our classification model $f_\phi$.
We train for $T = 1000$ epochs using mini-batches.

\begin{remark}
    Our method's runtime scales linearly with the number of epochs $T$ and the number of samples $b$, $b'$, and their product $b b'$.
    The main runtime cost arises from the computation of the gradients as $b$ and $b'$ are small constants.
\end{remark}

\section{Experiments}
\label{sec:experiments}
Section~\ref{sec:competitors} summarizes all PLL methods that we compare against,
Section~\ref{sec:setup} describes our experimental setup including the datasets and candidate generation strategies used,
and Section~\ref{sec:results} shows our main findings.
Our code, data, and candidate generation strategy is openly available.\textsuperscript{\ref{fnt:code}}

\subsection{Competitors}
\label{sec:competitors}
Besides our method \textsc{ViPll} and a variant of it containing ablations, we consider nine established competitors.
These are \textsc{PlKnn} \citep{HullermeierB06} and \textsc{Pl\-Ecoc} \citep{ZhangYT17}, as well as
the state-of-the-art deep learning methods \textsc{Proden} \citep{LvXF0GS20}, \textsc{Valen} \citep{XuQGZ21}, \textsc{Cavl} \citep{ZhangF0L0QS22}, \textsc{PiCO} \citep{WangXLF0CZ22}, \textsc{Pop} \citep{0009LLQG23}, \textsc{CroSel} \citep{crosel2024}, and \textsc{Cel} \citep{YangC0DJH25}.
For a fair comparison, we use the same base models for all approaches, that is, an MLP with \textsc{ReLU} activation and batch normalization.
For the colored image datasets, we use the pre-trained \textsc{Blip2} model \citep{0008LSH23} to extract 768-dimensional feature vectors.

\begin{table}[t!]
    \caption{
        Dataset characteristics of the five real-world PLL datasets (top) and the five supervised multi-class classification datasets with added candidate labels (bottom).
        We show the number of instances $n$, features $d$, and classes $k$, as well as the average candidate set sizes.
    }
    \label{tab:datasets}
    \begin{center}
        \begin{small}
            \begin{tabular}{
                >{\raggedright\arraybackslash}m{0.195\linewidth}
                >{\centering\arraybackslash}m{0.13\linewidth}
                >{\centering\arraybackslash}m{0.12\linewidth}
                >{\centering\arraybackslash}m{0.12\linewidth}
                >{\centering\arraybackslash}m{0.17\linewidth}
                }
                \toprule
                {Dataset}         & {Inst.\ $n$}      & {Feat.\ $d$}     & {Cls.\ $k$}   & {Avg.\ cand.} \\
                \midrule
                \emph{bird-song}  & \phantom{0}4\,966 & \phantom{0\,0}38 & \phantom{0}12 & 2.175         \\
                \emph{lost}       & \phantom{0}1\,122 & \phantom{0\,}108 & \phantom{0}14 & 2.217         \\
                \emph{mir-flickr} & \phantom{0}2\,778 & 1\,536           & \phantom{0}12 & 2.758         \\
                \emph{msrc-v2}    & \phantom{0}1\,755 & \phantom{0\,0}48 & \phantom{0}22 & 3.156         \\
                \emph{yahoo-news} & 22\,762           & \phantom{0\,}163 & 203           & 1.908         \\
                \midrule
                \emph{mnist}      & 70\,000           & \phantom{0\,}784 & \phantom{0}10 & 3.958         \\
                \emph{fmnist}     & 70\,000           & \phantom{0\,}784 & \phantom{0}10 & 3.242         \\
                \emph{kmnist}     & 70\,000           & \phantom{0\,}784 & \phantom{0}10 & 3.221         \\
                \emph{cifar10}    & 60\,000           & 3\,072           & \phantom{0}10 & 4.593         \\
                \emph{cifar100}   & 60\,000           & 3\,072           & 100           & 5.540         \\
                \bottomrule
            \end{tabular}
        \end{small}
    \end{center}
\end{table}

\begin{figure*}[p]
    \begin{subfigure}[t]{0.27\linewidth}
        \includegraphics{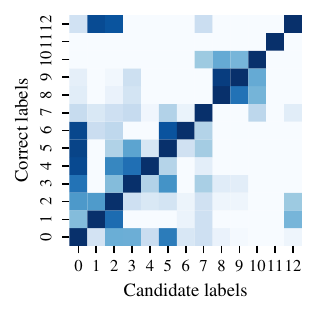}
        \hspace*{1.2cm}\parbox{4cm}{
            \caption{The real-world partially-labeled dataset \emph{bird-song}.}
            \label{fig:noise-gen:birdsong}
        }
    \end{subfigure}
    \hspace{0.6cm}
    \begin{subfigure}[t]{0.27\linewidth}
        \includegraphics{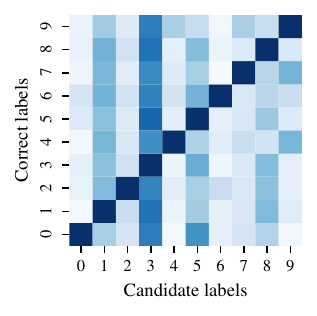}
        \hspace*{1.2cm}\parbox{4cm}{
            \caption{The \emph{mnist} dataset with our candidate generation strategy.}
            \label{fig:noise-gen:our-mnist}
        }
    \end{subfigure}
    \hspace{0.6cm}
    \begin{subfigure}[t]{0.27\linewidth}
        \includegraphics{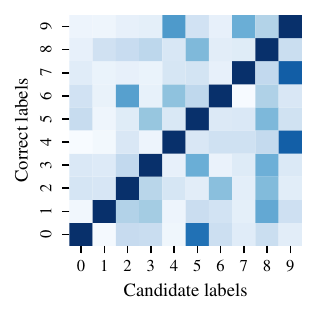}
        \hspace*{1.2cm}\parbox{4cm}{
            \caption{The \emph{mnist} dataset with the standard generation strategy.}
            \label{fig:noise-gen:mnist-id}
        }
    \end{subfigure}
    \hspace{0.17cm}
    \begin{subfigure}[t]{0.05\linewidth}
        \raisebox{0.74cm}{\includegraphics{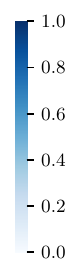}}
    \end{subfigure}
    \caption{
        The co-occurrences of candidate labels for different partially-labelled datasets and candidate generation strategies.
        In Figure~\ref{fig:noise-gen:birdsong}, for example, the correct label 0 co-occurs most often in candidate sets with the incorrect label 5.
        Our candidate generation strategy in Figure~\ref{fig:noise-gen:our-mnist} combines the instance-dependent noise in Figure~\ref{fig:noise-gen:mnist-id} with the class imbalances that occur in real-world data (compare Figure~\ref{fig:noise-gen:birdsong}).
    }
    \label{fig:noise-gen}
\end{figure*}

\begin{table*}[p]
    \caption{
        Average test-set accuracies ($\pm$ standard deviation) on the five real-world PLL datasets.
        The best result per dataset is highlighted in \textbf{bold}.
        All methods that are not significantly worse than the best method, using a t-test with level $0.05$, are indicated by $\ast$.
        \textsc{ViPll} gives the best results in the majority of experiments.
    }
    \label{tab:rl-results}
    \begin{center}
        \begin{small}
            \begin{tabular}{
                >{\raggedright\arraybackslash}m{0.155\linewidth}
                >{\centering\arraybackslash}m{0.139\linewidth}
                >{\centering\arraybackslash}m{0.139\linewidth}
                >{\centering\arraybackslash}m{0.139\linewidth}
                >{\centering\arraybackslash}m{0.139\linewidth}
                >{\centering\arraybackslash}m{0.139\linewidth}
                }
                \toprule
                {Methods}                      & {\emph{bird-song}}                           & {\emph{lost}}                                & {\emph{mir-flickr}}                          & {\emph{msrc-v2}}                             & {\emph{yahoo-news}}                          \\
                \midrule
                \textbf{\textsc{ViPll} (ours)} & \textbf{76.15 ($\pm$ 1.56) \phantom{$\ast$}} & \textbf{78.52 ($\pm$ 2.27) \phantom{$\ast$}} & 68.49 ($\pm$ 2.13) $\ast$                    & \textbf{60.24 ($\pm$ 2.45) \phantom{$\ast$}} & 63.99 ($\pm$ 0.58) \phantom{$\ast$}          \\
                \textsc{ViPll} (w/ ablations)  & 72.25 ($\pm$ 1.55) \phantom{$\ast$}          & 77.00 ($\pm$ 2.20) $\ast$                    & 65.25 ($\pm$ 2.84) $\ast$                    & 53.41 ($\pm$ 2.82) \phantom{$\ast$}          & 49.41 ($\pm$ 0.52) \phantom{$\ast$}          \\
                \midrule
                \textsc{PlKnn} (2005)          & 68.43 ($\pm$ 1.38) \phantom{$\ast$}          & 44.11 ($\pm$ 2.62) \phantom{$\ast$}          & 51.98 ($\pm$ 2.44) \phantom{$\ast$}          & 43.12 ($\pm$ 1.96) \phantom{$\ast$}          & 45.82 ($\pm$ 0.16) \phantom{$\ast$}          \\
                \textsc{PlEcoc} (2017)         & 61.04 ($\pm$ 2.17) \phantom{$\ast$}          & 64.17 ($\pm$ 3.81) \phantom{$\ast$}          & 50.68 ($\pm$ 0.78) \phantom{$\ast$}          & 28.78 ($\pm$ 1.61) \phantom{$\ast$}          & 47.64 ($\pm$ 0.36) \phantom{$\ast$}          \\
                \textsc{Proden} (2020)         & 71.17 ($\pm$ 1.66) \phantom{$\ast$}          & 73.44 ($\pm$ 2.01) \phantom{$\ast$}          & \textbf{68.67 ($\pm$ 1.74) \phantom{$\ast$}} & 55.23 ($\pm$ 3.44) \phantom{$\ast$}          & 62.65 ($\pm$ 1.21) \phantom{$\ast$}          \\
                \textsc{Valen} (2021)          & 71.99 ($\pm$ 1.72) \phantom{$\ast$}          & 67.02 ($\pm$ 3.63) \phantom{$\ast$}          & 64.96 ($\pm$ 2.16) \phantom{$\ast$}          & 50.11 ($\pm$ 2.44) \phantom{$\ast$}          & 59.91 ($\pm$ 1.43) \phantom{$\ast$}          \\
                \textsc{Cavl} (2022)           & 68.11 ($\pm$ 1.21) \phantom{$\ast$}          & 66.58 ($\pm$ 3.33) \phantom{$\ast$}          & 63.13 ($\pm$ 4.63) $\ast$                    & 53.64 ($\pm$ 3.16) \phantom{$\ast$}          & 62.60 ($\pm$ 1.24) \phantom{$\ast$}          \\
                \textsc{PiCO} (2022)           & 72.81 ($\pm$ 1.10) \phantom{$\ast$}          & 66.93 ($\pm$ 3.03) \phantom{$\ast$}          & 49.32 ($\pm$ 2.38) \phantom{$\ast$}          & 56.82 ($\pm$ 5.17) $\ast$                    & 61.09 ($\pm$ 0.66) \phantom{$\ast$}          \\
                \textsc{Pop} (2023)            & 71.71 ($\pm$ 1.00) \phantom{$\ast$}          & 72.73 ($\pm$ 1.93) \phantom{$\ast$}          & 67.27 ($\pm$ 2.03) $\ast$                    & 55.23 ($\pm$ 2.85) \phantom{$\ast$}          & 63.09 ($\pm$ 0.68) \phantom{$\ast$}          \\
                \textsc{CroSel} (2024)         & 75.49 ($\pm$ 1.25) $\ast$                    & 72.91 ($\pm$ 3.01) \phantom{$\ast$}          & 65.43 ($\pm$ 1.86) \phantom{$\ast$}          & 52.33 ($\pm$ 4.11) \phantom{$\ast$}          & \textbf{67.10 ($\pm$ 0.77) \phantom{$\ast$}} \\
                \textsc{Cel} (2025)            & 71.75 ($\pm$ 1.76) \phantom{$\ast$}          & 74.15 ($\pm$ 2.11) \phantom{$\ast$}          & 68.56 ($\pm$ 2.87) $\ast$                    & 53.13 ($\pm$ 2.89) \phantom{$\ast$}          & 63.77 ($\pm$ 1.52) \phantom{$\ast$}          \\
                \bottomrule
            \end{tabular}
        \end{small}
    \end{center}
\end{table*}

\begin{table*}[p]
    \caption{
        Average test-set accuracies ($\pm$ standard deviation) on the five supervised multi-class classification datasets with added candidate labels.
        The best result per dataset is highlighted in \textbf{bold}.
        All methods that are not significantly worse than the best method, using a t-test with level $0.05$, are indicated by $\ast$.
        \textsc{ViPll} gives the best results in the majority of experiments.
    }
    \label{tab:sup-results}
    \begin{center}
        \begin{small}
            \begin{tabular}{
                >{\raggedright\arraybackslash}m{0.155\linewidth}
                >{\centering\arraybackslash}m{0.139\linewidth}
                >{\centering\arraybackslash}m{0.139\linewidth}
                >{\centering\arraybackslash}m{0.139\linewidth}
                >{\centering\arraybackslash}m{0.139\linewidth}
                >{\centering\arraybackslash}m{0.139\linewidth}
                }
                \toprule
                {Methods}                      & {\emph{mnist}}                               & {\emph{kmnist}}                              & {\emph{fmnist}}                              & {\emph{cifar10}}                             & {\emph{cifar100}}                            \\
                \midrule
                \textbf{\textsc{ViPll} (ours)} & \textbf{80.81 ($\pm$ 0.60) \phantom{$\ast$}} & 58.78 ($\pm$ 1.34) $\ast$                    & \textbf{74.87 ($\pm$ 0.93) \phantom{$\ast$}} & \textbf{96.64 ($\pm$ 3.92) \phantom{$\ast$}} & \textbf{77.76 ($\pm$ 0.51) \phantom{$\ast$}} \\
                \textsc{ViPll} (w/ ablations)  & 52.93 ($\pm$ 1.76) \phantom{$\ast$}          & 40.41 ($\pm$ 0.70) \phantom{$\ast$}          & 67.36 ($\pm$ 0.66) \phantom{$\ast$}          & 88.43 ($\pm$ 0.14) \phantom{$\ast$}          & 45.74 ($\pm$ 2.49) \phantom{$\ast$}          \\
                \midrule
                \textsc{PlKnn} (2005)          & 63.73 ($\pm$ 0.17) \phantom{$\ast$}          & 46.62 ($\pm$ 0.04) \phantom{$\ast$}          & 61.70 ($\pm$ 0.18) \phantom{$\ast$}          & 76.17 ($\pm$ 0.23) \phantom{$\ast$}          & 68.05 ($\pm$ 0.01) \phantom{$\ast$}          \\
                \textsc{PlEcoc} (2017)         & 55.59 ($\pm$ 1.81) \phantom{$\ast$}          & 37.94 ($\pm$ 0.87) \phantom{$\ast$}          & 66.15 ($\pm$ 1.90) \phantom{$\ast$}          & 77.13 ($\pm$ 4.88) \phantom{$\ast$}          & 52.61 ($\pm$ 1.01) \phantom{$\ast$}          \\
                \textsc{Proden} (2020)         & 71.10 ($\pm$ 1.41) \phantom{$\ast$}          & 58.86 ($\pm$ 0.55) $\ast$                    & 69.04 ($\pm$ 1.00) \phantom{$\ast$}          & 87.17 ($\pm$ 0.26) \phantom{$\ast$}          & 77.16 ($\pm$ 1.00) $\ast$                    \\
                \textsc{Valen} (2021)          & 59.31 ($\pm$ 2.30) \phantom{$\ast$}          & 43.97 ($\pm$ 0.80) \phantom{$\ast$}          & 65.52 ($\pm$ 1.74) \phantom{$\ast$}          & 82.63 ($\pm$ 0.55) \phantom{$\ast$}          & 71.85 ($\pm$ 0.26) \phantom{$\ast$}          \\
                \textsc{Cavl} (2022)           & 72.12 ($\pm$ 2.41) \phantom{$\ast$}          & 57.88 ($\pm$ 1.80) $\ast$                    & 71.54 ($\pm$ 1.08) \phantom{$\ast$}          & 89.73 ($\pm$ 3.79) \phantom{$\ast$}          & 72.73 ($\pm$ 1.52) \phantom{$\ast$}          \\
                \textsc{PiCO} (2022)           & 78.45 ($\pm$ 0.58) \phantom{$\ast$}          & 56.10 ($\pm$ 1.52) \phantom{$\ast$}          & 73.89 ($\pm$ 0.49) $\ast$                    & 93.08 ($\pm$ 4.85) $\ast$                    & 70.30 ($\pm$ 0.83) \phantom{$\ast$}          \\
                \textsc{Pop} (2023)            & 71.88 ($\pm$ 0.87) \phantom{$\ast$}          & 58.25 ($\pm$ 0.58) \phantom{$\ast$}          & 69.57 ($\pm$ 0.63) \phantom{$\ast$}          & 87.18 ($\pm$ 0.24) \phantom{$\ast$}          & 77.42 ($\pm$ 0.76) $\ast$                    \\
                \textsc{CroSel} (2024)         & 73.26 ($\pm$ 0.83) \phantom{$\ast$}          & \textbf{59.05 ($\pm$ 0.25) \phantom{$\ast$}} & 69.55 ($\pm$ 0.69) \phantom{$\ast$}          & 88.24 ($\pm$ 0.09) \phantom{$\ast$}          & 77.68 ($\pm$ 0.99) $\ast$                    \\
                \textsc{Cel} (2025)            & 68.57 ($\pm$ 1.90) \phantom{$\ast$}          & 56.26 ($\pm$ 1.38) \phantom{$\ast$}          & 68.26 ($\pm$ 1.06) \phantom{$\ast$}          & 85.91 ($\pm$ 0.14) \phantom{$\ast$}          & 73.50 ($\pm$ 0.74) \phantom{$\ast$}          \\
                \bottomrule
            \end{tabular}
        \end{small}
    \end{center}
\end{table*}

\subsection{Experimental Setup}
\label{sec:setup}

\paragraph{Data.}
As is common in the PLL literature \citep{LvXF0GS20,0009LLQG23}, we use real-world PLL datasets as well as supervised multi-class classification datasets with added candidates.
Table~\ref{tab:datasets} summarizes the dataset characteristics.
The real-world PLL datasets include the \emph{bird-song} \citep{BriggsFR12}, \emph{lost} \citep{CourST11}, \emph{mir-flickr} \citep{HuiskesL08}, \emph{msrc-v2} \citep{LiuD12}, and \emph{yahoo-news} dataset \citep{GuillauminVS10}.
The supervised multi-class classification datasets include the \emph{mnist} \citep{uci-mnist}, \emph{fmnist} \citep{uci-fmnist}, \emph{kmnist} \citep{uci-kmnist}, \emph{cifar10} \citep{Krizhevsky2009LearningML}, and \emph{cifar100} dataset \citep{Krizhevsky2009LearningML}.

\paragraph{Candidate Generation.}
To obtain partial labels for the five supervised datasets, we add candidate labels using the common instance-dependent generation strategy \citep{XuQGZ21,YangC0DJH25} as follows.
One first trains a supervised MLP classifier $g : \X \to \Simp$.
Then, given an instance $x \in \X$ with correct label $y \in \Y$, a binomial flipping probability of $\xi_1(x, \bar{y}) = g_{\bar{y}}(x) / \max_{y' \in \Y \setminus \{\bar{y}\}} g_{y'}(x)$ decides whether to add the incorrect label $\bar{y} \neq y$ to the candidate set $s$, that is, one samples $w_{\bar{y}} \sim \mathcal{U}(0, 1)$ for each $\bar{y} \neq y$ and adds it to $x$'s candidate set $s$ if $w_{\bar{y}} \leq \xi_1(x, \bar{y})$.

This generation strategy, however, leads to a rather balanced distribution of incorrect candidate labels (compare Figure~\ref{fig:noise-gen:mnist-id}).
Therefore, we combine the instance-dependent flipping probability $\xi_1$ with a random long-tail class imbalance $\xi_2(x, \bar{y}) = 0.025^{\frac{\pi(\bar{y}) + 1}{k}}$, where $\pi : \Y \to \Y$ is a random permutation of the class labels.
The resulting probability is $\xi(x, \bar{y}) = 0.3 \xi_1(x, \bar{y}) + 0.7 \xi_2(x, \bar{y})$ and we add $\bar{y} \neq y$ to $x$'s candidate set $s$ if $w_{\bar{y}} \leq \xi(x, \bar{y})$, with $w_{\bar{y}} \sim \mathcal{U}(0, 1)$.
Figure~\ref{fig:noise-gen} compares the co-occurrences of the candidate labels on the real-world PLL dataset \emph{bird-song} in Figure~\ref{fig:noise-gen:birdsong}, on the \emph{mnist} dataset with the instance-dependent strategy in Figure~\ref{fig:noise-gen:mnist-id}, and on the \emph{mnist} dataset with our mixed generation strategy in Figure~\ref{fig:noise-gen:our-mnist}.
The \emph{mnist} dataset with the instance-dependent strategy entails a rather balanced distribution of incorrect candidates (Figure~\ref{fig:noise-gen:mnist-id}).
Real-world PLL data, however, often has a highly imbalanced distribution of incorrect candidate labels (Figure~\ref{fig:noise-gen:birdsong}).
Class $0$, for example, appears more often as an incorrect candidate label compared to class $1$.
We mimic this with our generation strategy in Figure~\ref{fig:noise-gen:our-mnist}.

\subsection{Results}
\label{sec:results}

\paragraph{Predictive Performance.}
We repeat all experiments five times and show means and standard deviations of the results.
Table~\ref{tab:rl-results} shows all results on the real-world PLL datasets and
Table~\ref{tab:sup-results} on the supervised classification datasets with added candidate labels.
On both, the real-world PLL and the supervised datasets, our method \textsc{ViPll} has the best results in the majority of experiments.
We mark methods that are not significantly worse with $\ast$ using a t-test with level $0.05$.
On the \emph{lost} and \emph{mnist} datasets, \textsc{ViPll} significantly outperforms the competitors;
on the \emph{bird-song}, \emph{msrc-v2}, \emph{fmnist}, \emph{cifar10}, and \emph{cifar100} datasets, \textsc{ViPll} performs best with only few competitors that are not significantly worse; and
on the \emph{mir-flickr}, \emph{yahoo-news}, and \emph{kmnist} datasets, \textsc{ViPll} performs comparable compared to the best performing method.
Overall, \textsc{ViPll} wins most direct comparisons with its competitors.


\paragraph{Ablation Study.}
\textsc{ViPll} relies on the causal factorization of the PLL problem in Figure~\ref{fig:gmodel2} (as discussed in Section~\ref{sec:main} and~\ref{sec:opt}).
\textsc{ViPll} (with ablations), which omits the use of the Markov equivalence shown in Proposition~\ref{prop:markov} and uses the causal model in Figure~\ref{fig:gmodel1}, minimizes
\begin{align}
    \MoveEqLeft \LL^{\mathrm{abl}}(\phi) = \expected_{(x,s) \sim P_{XS}}\!\big[ \expected_{y \sim q_\phi(y \mid x, s)}[         \label{eq:abl-l}                    \\
     & \qquad \underbrace{\log q_\phi(y \mid x, s) - \log p(y \mid x)}_{= D_{\mathrm{KL}}( q_{\phi}(y \mid x, s) \,\|\, p(y \mid x) )} - \log p(s \mid x, y) \notag \\
     & \qquad - \underbrace{\log p(x) + \log p(x, s)}_{\text{ constant w.r.t. } \phi} ] \big] \text{,} \notag
\end{align}
where $q_\phi(y \mid x, s)$ is modeled as discussed in Section~\ref{sec:opt} and $p(s \mid x, y)$ as discussed in Section~\ref{sec:psxy}.
The term $p(y \mid x)$ is modeled by maintaining a labeling vector $y_i$ for each $(x_i, s_i) \in \D$ and updating it similarly to Line~22 in Algorithm~\ref{alg:method}.
The remaining terms are constant w.r.t.~$\phi$.

Recall from Section~\ref{sec:opt} that the causal model in~\eqref{eq:model2} is beneficial in our setting as it explicitly allows modeling a generative model of the observed features $P(X \mid Y)$ as well as prior information $P(Y)$.
The results in Table~\ref{tab:rl-results} and~\ref{tab:sup-results} support this hypothesis: The approach in~\eqref{eq:abl-l} is inferior to our method in most cases.
\textsc{ViPll} (with ablations) performs comparable to \textsc{ViPll} only on the \emph{lost} and \emph{mir-flickr} datasets.
On the remaining datasets, it performs worse.

To summarize our findings, \textsc{ViPll} performs the best in almost all cases and across a wide-range of artificial and real-world datasets.
Additionally, our ablation experiments demonstrate the benefit of leveraging the Markov equivalence shown Proposition~\ref{prop:markov}.

\section{Conclusion}
\label{sec:conclusions}
We propose a novel approach to PLL by formulating label disambiguation as an amortized VI problem.
Leveraging fixed-form variational distributions parameterized by NNs, our method unifies the flexibility of deep learning with the rigor of probabilistic modeling.
This formulation enhances scalability, accommodates diverse data modalities, and enables the incorporation of prior knowledge in the candidate label sets.
Moreover, the architecture-agnostic design ensures broad applicability.
Extensive empirical evaluations on synthetic and real-world data validate the effectiveness and generality of our approach.

\section*{Acknowledgements}
This work was supported by the German Research Foundation (DFG) Research Training Group GRK 2153: \emph{Energy Status Data --- Informatics Methods for its Collection, Analysis and Exploitation}.

\bibliography{references}
\bibliographystyle{icml2026}

\end{document}